\pgfplotsset{compat=1.18}
\title{Improving Stochastic Action-Constrained \\ Reinforcement Learning via Truncated Distributions}
\author{
Roland Stolz,
Michael Eichelbeck,
Matthias Althoff
}
\small\texttt{\{roland.stolz, michael.eichelbeck, althoff\}@tum.de}
\newtheorem{problem}{Problem}
\newtheorem{proposition}{Proposition}
\newcommand{\As}{\mathcal{A}^s}
\newcommand{\pol}[1]{\pi_\theta({#1} \vert s)}
\newcommand{\poldot}{\pol{\cdot}}
\newcommand{\pola}{\pol{a}}
\newcommand{\pols}[1]{\pi_\theta^s({#1} \vert s)}
\newcommand{\polsdot}{\pols{\cdot}}
\newcommand{\polsa}{\pols{a}}
\newcommand{\mvtd}[1]{f(x; {#1})}
\DeclareMathOperator*{\argmax}{arg\,max}
\DeclareMathOperator*{\argmin}{arg\,min}
\newacro{RL}{reinforcement learning}
\newacro{PPO}{proximal policy optimization}
\newacro{SAC}{soft actor-critic}
\newacro{PDF}{probability density function}
\newacro{CDF}{cumulative distribution function}
\newacro{RDHR}{random direction hit-and-run}
\begin{document}

\maketitle

\begin{abstract}

    In reinforcement learning (RL), it is often advantageous to consider additional constraints on the action space to ensure safety or action relevance. Existing work on such action-constrained RL faces challenges regarding effective policy updates, computational efficiency, and predictable runtime.
	Recent work proposes to use truncated normal distributions for stochastic policy gradient methods. However, the computation of key characteristics, such as the entropy, log-probability, and their gradients, becomes intractable under complex constraints. Hence, prior work approximates these using the non-truncated distributions, which severely degrades performance.
	We argue that accurate estimation of these characteristics is crucial in the action-constrained RL setting, and propose efficient numerical approximations for them.
	We also provide an efficient sampling strategy for truncated policy distributions and validate our approach on three benchmark environments, which demonstrate significant performance improvements when using accurate estimations.
    
\end{abstract}


\section{Introduction}

\Ac{RL} natively operates on a static action space from which the agent can choose an action in each time step. However, a dynamic restriction of the action space can be advantageous to exclude irrelevant actions \cite{stolz_excluding_2024} or even necessary to guarantee safety \cite{krasowski2023provably}. The field of action-constrained \ac{RL} has developed various approaches to achieve this with zero constraint violations. 
Most methods handle constraints by projecting actions onto sets formed by the action constraints \cite{kasaura_benchmarking_2023}, which can lead to zero gradients, because multiple actions outside the corresponding set might be mapped to the same action \cite{lin_escaping_2021, kasaura_benchmarking_2023}. Some approaches address this issue by attempting to retain the gradient through learning a flow model \cite{brahmanage_flowpg_2023, brahmanage_leveraging_2025}, or using Franke-Wolfe optimization \cite{lin_escaping_2021}.

Other action-constrained \ac{RL} methods use different mappings from actions outside the constraints to within the set, such as $\alpha$-projection \cite{sanket_solving_2020}, replacement with fail-safe actions \cite{krasowski2023provably}, or radially contracting the action space \cite{kasaura_benchmarking_2023, stolz_excluding_2024}. 
One proposal by \cite{stolz_excluding_2024} is to learn actions in a box-constrained latent space that is mapped to the constrained action space via a linear transformation, which is efficient, but restricted to constraints that can be formulated as zonotopes. The work by \cite{theile_learning_2024} uses a similar approach, but learns the mapping from the latent space to the action constraints.
Another recent work relies on rejection sampling to obtain feasible actions \cite{hung_efficient_2025}. While a multi-objective algorithm is developed in that work to incentivize the agent to follow trajectories with large feasible action sets, rejection sampling cannot provide guarantees with regard to computation time, which we illustrate as part of our discussion in Sec. \ref{sec:sampling}.

Instead of defining a mapping to constraint-satisfying actions, the study in \cite{stolz_excluding_2024} proposes to directly truncate the policy distribution using the action constraints. However, this has two limitations. First, the policy update is approximated using the non-truncated distribution, because the metrics required by the \ac{RL} objectives, such as entropy and log-probability, are generally intractable under the constrained distribution.
Second, the sampling approach is based on a geometric random walk, which has high computational costs per sample and prevents using the reparameterization trick \cite{kingma_auto-encoding_2014} for gradient estimation in stochastic \ac{RL}.

This paper builds on the work employing truncated distributions for \ac{RL} \cite{stolz_excluding_2024} and tackles key limitations by developing more expressive policy updates and efficient sampling methods. In particular, our contributions are:
\begin{itemize}
    \item Approximations of the intractable log-probability and entropy for truncated distributions applicable to convex, non-convex, and disjoint sets;
    \item An efficient, hybrid sampling algorithm for truncated distributions, merging rejection sampling and geometric random walks for improved performance;
    \item A novel application of the reparameterization trick to geometric random walks, enabling differentiable sampling.
\end{itemize}

The remainder of this study is organized as follows. After introducing preliminaries (Sec. \ref{sec:prelim}), we formalize our problem statement (Sec. \ref{sec:problem_statement}), followed by our proposed methods for leveraging truncated distributions in \ac{RL} (Sec \ref{sec:method}). 
Finally, we compare the developed mechanisms on three \ac{RL} benchmarks (Sec. \ref{sec:experiments}) and provide concluding remarks (Sec. \ref{sec:conclusion}).

\section{Preliminaries}\label{sec:prelim}
\subsection{Action-Constrained Markov Decision Processes}

We consider problems that can be modeled by a Markov decision processes, defined as a tuple $\left( \mathcal{S}, \mathcal{A}, T, r, \gamma \right)$ comprising the following components: an observable and continuous state space $\mathcal{S} \subseteq \mathbb{R}^{m}$, an action space $\mathcal{A} \subseteq \mathbb{R}^{n}$, a stationary state-transition distribution $T(s'| a, s)$ that characterizes the probability of transitioning to a subsequent state $s' \in \mathcal{S}$ given the current state $s \in \mathcal{S}$ and executed action $a \in \mathcal{A}$, a reward function $r: \mathcal{S} \times \mathcal{A} \rightarrow \mathbb{R}$, and a discount factor $\gamma$ for future rewards \cite{sutton_reinforcement_2018}. 
Action-constrained \ac{RL} further considers a state-dependent feasible action space $\As(s) \subseteq \mathcal{A}$. 
The goal is to learn a policy $\polsa$ parameterized by $\theta$
that maximizes the expected discounted return $\max_\theta \mathbb{E}_{\pi_\theta} \sum_{t=0}^{\infty} \gamma^t r(s_t, a_t)$, while only selecting actions from the feasible action space $\mathcal{A}^s(s_t)$ at each time step $t$.

\subsection{Stochastic Policy RL}

Stochastic, on-policy algorithms, such as \ac{PPO}, learn the optimal policy $\pi_\theta^*(a \vert s)$ by updating the parameters with $\theta \leftarrow \theta + \beta \nabla_{\theta} J(\pi_\theta)$, according to the policy gradient \cite[Thm. 2]{sutton_policy_1999}
\begin{equation}\label{eq:policy_gradient}
    \nabla_{\theta} J(\pi_\theta) = \mathbb{E}_{\pi_\theta} \left[ \nabla_\theta \log \pi_\theta(a | s) 
    A_{\pi_\theta} (a, s) \right],
\end{equation}
where $A_{\pi_\theta}(a, s)$ denotes the advantage function, which quantifies the expected improvement in return from executing action $a$ in state $s$ relative to the expected performance under the current policy $\pi_\theta(a \vert s)$, and is typically approximated using a neural network.

As an alternative, the stochastic, off-policy algorithm \ac{SAC} \cite{haarnoja_soft_2018} incorporates entropy regularization into the optimization objective. \ac{SAC} aims to maximize the expected cumulative return augmented with an entropy term
\begin{equation}\label{eq:sac_objective}
    J(\pi_\theta) = \mathbb{E}_{\pi_\theta} \left[ \sum_{t=0}^{T} \gamma^t \Big( r(s_t, a_t) + \alpha \mathcal{H} \big( \pi_\theta(\cdot \vert s_t)\big) \Big) \right],
\end{equation}
where $\mathcal{H}(\pi_\theta(\cdot \vert s_t)) = -\mathbb{E}_{a \sim \pi_\theta} [\log \pi_\theta(a_t | s_t)]$ represents the policy entropy and $\alpha$ is a temperature parameter controlling the trade-off between exploration and exploitation. 
The optimal policy $\pi_\theta^*(a \vert s)$ is obtained via gradient descent on the parameters $\theta$ with the gradient
\begin{equation}\label{eq:sac_gradient}
\begin{aligned}
    \nabla_\theta& J(\pi_\theta) = \mathbb{E}_{s \sim \mathcal{D}} \big[\nabla_\theta \log \pi_\theta (a \vert s) + \\
    & \big( \alpha \nabla_a \log\pi_\theta(a \vert s) - \nabla_a Q_{\phi}(s, a) \big) \nabla_\theta a \big\vert_{a \sim \pi_\theta} \big],
\end{aligned}
\end{equation}
where $\mathcal{D}$ denotes the replay buffer containing experience tuples, and $Q_\phi$ is the soft Q-function approximated by critics with parameters $\phi$. Evaluating this gradient requires backpropagating through $a \sim \pi_\theta$, which is achieved using the reparameterization trick \cite{kingma_auto-encoding_2014}.

\subsection{Set Representations}

A multidimensional interval $\mathcal{I} \subset \mathbb{R}^d$ is defined by lower and upper bounds $l, u \in \mathbb{R}^d$, such that
\begin{equation}
    \mathcal{I} = [l, u] = \{x \in \mathbb{R}^d : l \leq x \leq u\}.
\end{equation}
A polytope $\mathcal{P} \subset \mathbb{R}^d$ can be represented as the intersection of $m$ halfspaces and is denoted by
\begin{equation}
    \mathcal{P}  = \{x \in \mathbb{R}^d : Ax \leq b\},
\end{equation}
where $A \in \mathbb{R}^{m \times d}$ and $b \in \mathbb{R}^m$.

\subsection{Truncated Distributions}
We write a continuous \ac{PDF} $f(x) \in \mathbb{R}^d$ truncated to a set $\mathcal{W}$ as 
\begin{equation}\label{eq:truncated_general}
   \mvtd{\mathcal{W}} = \frac{f(x) \cdot \phi(x; \mathcal{W})}{Z_\mathcal{W}},
\end{equation}
where $\phi(x; \mathcal{W})$ is the indicator function that returns 1 if $x \in \mathcal{W}$ and 0 otherwise, and $Z_\mathcal{W} = \int_{x \in \mathcal{W}} f(x) \mathrm{d} x$ is the normalizing constant. The function is generally intractable due to the integral in the denominator. However, for univariate $f(x)$, and when  $\mathcal{W} = [ l, u ] \subset \mathbb{R}$, it can be written as \cite[Eq.~13.133]{johnson_continuous_1994}
\begin{equation}\label{eq:truncated_univariate}
	\mvtd{[l, u]} = \begin{cases}
		\frac{f(x)}{Z_{[l,u]}} & \text{if } l \leq x \leq u, \\
		0,             & \text{otherwise},
	\end{cases}
\end{equation}
where $Z_{[l,u]} = F(u) - F(l)$, and $F(x)$ is the \ac{CDF} of $f(x)$ (see Fig.~\ref{fig:truncated_normal}).
The entropy in general is \cite[Eq.~2.1]{shangari_partial_2012}
\begin{equation}\label{eq:entropy_truncated_univariate}
    \begin{aligned}
	\mathcal{H}\big( &f(\cdot; [l, u]) \big) = \\
    &-\frac{1}{Z_{[l,u]}} \int_l^u f(x) \log f(x) \mathrm{d}x + \log Z_{[l,u]},
    \end{aligned}
\end{equation}
which, for Gaussian distributions $\mathcal{N}(\mu, \sigma^2)$, has the closed from \cite[Sec.~4.26]{michalowicz_handbook_2013}
\begin{equation}\label{eq:entropy_truncated_gaussian}
    \begin{aligned}
        \mathcal{H}\big( &f(\cdot; [l, u]) \big) = \\
        &\frac{1}{2} \log(2\pi e \sigma^2) + \log(Z_{[l,u]})
         - \frac{u' \, \varphi(u') - l' \, \varphi(l')}{2Z_{[l,u]}},
    \end{aligned}
\end{equation}
where $u' = \frac{u - \mu}{\sigma}$, $l' = \frac{l - \mu}{\sigma}$, and $\varphi(x)$ is the \ac{PDF} of the standard normal distribution. 
The mode of a truncated distribution (i.e., the point at which its \ac{PDF} attains the maximum value) for a Gaussian $f(x)$ is
\begin{equation}\label{eq:mode_truncated_dist}
	\argmax_{x} f(x; [l,u]) =
	\begin{cases}
		l,   & \text{if } \mu \leq l, \\
		u,   & \text{if } \mu \geq u, \\
		\mu, & \text{otherwise.}
	\end{cases}
\end{equation}
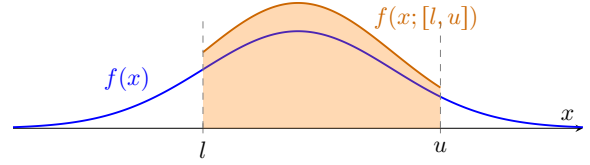
\begin{figure}[!t]
    \centering
    \begin{tikzpicture}[scale=0.9]
	\def\Z{0.7745}
	\begin{axis}[
			domain=-3:3,
			samples=200,
			axis x line=middle,
			axis y line=none,
			width=10cm,      
			height=3.2cm,     
			xlabel={$x$},
			xtick={-1, 1.5},
			xticklabels={$l$, $u$},
			ytick=\empty,
			ymin=0,
			ymax=0.45,
			clip=false,
		]
		\addplot [thick, blue, domain=-3:3] {1/(sqrt(2*pi))*exp(-0.5*x^2)};

		\addplot [
			domain=-1:1.5,
			samples=100,
			fill=orange,
			opacity=0.3,
			draw=none
		] {1/(sqrt(2*pi))*exp(-0.5*x^2)/\Z} \closedcycle;

		\addplot [
			thick,
			orange!80!black,
			domain=-1:1.5,
			samples=100
		] {1/(sqrt(2*pi))*exp(-0.5*x^2)/\Z};

		\addplot [dashed, gray] coordinates {(-1,0) (-1,0.45)};
		\addplot [dashed, gray] coordinates {(1.5,0) (1.5,0.45)};

		\node[orange!80!black] at (1.35,0.37) [above] {$f(x;[l,u])$};
		\node[blue] at (-1.8,0.12) [above] {$f(x)$};

	\end{axis}
\end{tikzpicture}
    \caption{The \ac{PDF} of the standard normal distribution $f(x)$ is truncated to the interval $[l, u]$ to obtain the truncated \ac{PDF} $f(x; [l, u])$. The area under the curve is normalized to one.}
    \label{fig:truncated_normal}
\end{figure}

\subsection{Sampling}\label{sec:prelim_sampling}
We can obtain a sample $q$ from a univariate truncated distribution $f(\cdot; [l, u])$ by first sampling a uniform random variable $y \sim \mathcal{U}(0, 1)$ and then applying the inverse \ac{CDF}\footnote{From "The Truncated Normal Distribution", Sec.~3.4, \url{https://people.sc.fsu.edu/~jburkardt/presentations/truncated_normal.pdf}}:
\begin{equation} \label{eq:inverse_transform_sampling}
	q = F^{-1} \Big(F(l) + y \big( F(u) - F(l) \big) \Big).
\end{equation}
For a multi-variate truncated distribution, we can obtain samples analytically only in special cases, which we explore in Sec.~\ref{sec:analytic_solution}. In the general case, we must use numerical alternatives, the most straightforward of which is rejection sampling. There, samples are drawn as $q \sim f(\cdot)$, and accepted as samples from $f(\cdot; \mathcal{W})$, if $q \in \mathcal{W}$.
As an alternative, \cite{stolz_excluding_2024} propose to use Markov chain Monte Carlo sampling to draw from $f(\cdot; \mathcal{W})$, specifically, the \ac{RDHR} algorithm \cite{gass_hit-and-run_2013,chalkis_volesti_2020}. 



\section{Problem Statement}\label{sec:problem_statement}
We consider the following setting for action-constrained RL in this work.
The set of action constraints $\As(s)$ is given for each state $s \in \mathcal{S}$. We directly truncate the original policy distribution $\pol{\cdot}$ to obtain the \ac{PDF} of the satisfying policy distribution $\pols{\cdot}$ as \cite{stolz_excluding_2024}
\begin{equation}\label{eq:truncated_policy_dist}
	\polsa = \frac{\pola \phi(a; \As)}{\int_{\tilde{a} \in \As(s)} \pol{\tilde{a}} \mathrm{d}\tilde{a}}
	= \frac{\pola \phi(a; \As)}{Z_{\As}(\theta)},
\end{equation}
where $\phi(a;\As)$ is the indicator function that returns 1 if $a \in \As(s)$ and 0 otherwise. Since we want to use $\polsdot$ for stochastic policy gradient algorithms, we need to solve the following problems.

\begin{problem}
Compute a differentiable expression for the log-probability of an action $\log \polsa$ with the gradient $\nabla_\theta \log \polsa = \nabla_\theta \log \pola - \nabla_\theta \int_{\tilde{a} \in \As(s)} \pol{\tilde{a}} \mathrm{d}\tilde{a} $.
\end{problem}

\begin{problem}
Compute the entropy $\mathcal{H}\big(\polsdot\big)$.
\end{problem}

\begin{problem}
Compute the mode of the truncated policy distribution $\argmax_{a \in \As} \polsa$ to evaluate the \ac{RL} agent.
\end{problem}

\begin{problem}
Sample efficiently from $\polsdot$.
\end{problem}

The difficulty of each problem depends on the nature of the set $\As$ and $\pol{\cdot}$. There are analytical solutions for a special case, which we detail in Sec.~\ref{sec:analytic_solution}. For more general cases, we propose approximations in Sec.~\ref{sec:approximations}, and Sec.~\ref{sec:sampling} proposes strategies for sampling from $\polsdot$.

\section{Using Truncated Distributions in RL}\label{sec:method}
\newcommand{\polsmarginal}{{\pi_\theta^s}^{(i)}(a_i \vert s)}
\newcommand{\polsdotmarginal}{{\pi_\theta^s}^{(i)}(\cdot \vert s)}
\newcommand{\Ast}{\tilde{\mathcal{A}}^s}

\subsection{Analytical Solution}\label{sec:analytic_solution}
When the policy distribution can be fully factorized into each action dimension $i$, i.e., $\pola = \prod_{i=1}^n \pi_\theta^{(i)}(a_i \vert s)$, and $\As$ is an interval $[l, u]$, the truncated policy distribution can also be factorized as
\begin{equation}\label{eq:truncated_policy_dist_factorized}
	\polsa = \prod_{i=1}^n \polsmarginal = \prod_{i=1}^n f(a_i; [l_i, u_i]),
\end{equation}
where the $i$-th marginal distribution $\polsmarginal$ is a univariate, truncated distribution as defined in \eqref{eq:truncated_univariate}. Using the independent dimensions, we can define the entropy from the sum of the marginals \cite[Cor.~8.6.2]{cover_differential_2005} as
\begin{equation}\label{eq:entropy_factorized}
	\mathcal{H}\big(\polsdot\big) = \sum_{i=1}^n \mathcal{H}\big(\polsdotmarginal\big).
\end{equation}
When the original policy distribution $\poldot$ is Gaussian, which is the case in most stochastic policy gradient \ac{RL} algorithms, such as \ac{PPO} \cite{schulman_proximal_2017} or \ac{SAC} \cite{haarnoja_soft_2018}, we can use \eqref{eq:entropy_truncated_gaussian} for the marginal entropies in \eqref{eq:entropy_factorized} to receive a closed-form expression for the entropy of the truncated policy distribution.
Finally, samples from the truncated distribution $\polsdot$ can be obtained by using inverse transform sampling \eqref{eq:inverse_transform_sampling} with each marginal distribution $\polsdotmarginal$ independently.

\subsection{Approximations}\label{sec:approximations}
To accurately compute the log-probability $\log \polsa$ in \eqref{eq:truncated_policy_dist}, we need to evaluate the normalizing constant $Z_{\As}(\theta) = \int_{\tilde{a} \in \As(s)} \pol{\tilde{a}} \mathrm{d}\tilde{a}$.
The integral is generally intractable, but it can be approximated well by using numerical integration methods, such as Monte Carlo integration \cite{robert_monte_2004} or cubature methods \cite{genz_adaptive_2003}. However, the first requires many samples to achieve a low error, and for the second, the number of evaluations of the functions scales exponentially with the dimension \cite{genz_adaptive_2003}.
Also, both methods make it difficult to back-propagate through the estimate of $Z_{\As}(\theta)$, hence the gradient would have to be approximated as $\nabla_\theta \log \polsa \approx \nabla_\theta \pola$. Therefore, we propose different methods for approximating the normalizing constant, which vary depending on the nature of the set $\As$. We first present methods for simple convex sets, followed by non-convex sets.

\subsubsection{Convex Sets}
In order to utilize the analytical solutions in Sec.~\ref{sec:analytic_solution} for general convex sets, we propose to approximate the convex set $\As$ as an interval $\mathcal{I}$, and use the metrics of the policy distribution truncated to $\mathcal{I}$ (which we denote as $\polsdot_\mathcal{I}$) as approximations for $\polsdot$:
\begin{align}
	\log \polsa           & \approx \log \pola - \log Z_{\mathcal{I}}(\theta), \label{eq:log_prob_approximation} \\
	\mathcal{H}(\polsdot) & \approx \mathcal{H}(\poldot_\mathcal{I}) \label{eq:entropy_approximation}.
\end{align}
We obtain the inner interval approximation $\mathcal{I}_\text{inner}$ of a convex set $\mathcal{S}$ by maximizing the geometric mean of its diameter:
\begin{equation}\label{eq:inner_interval_approximation}
	\begin{aligned}
		\max \quad              & \big( \prod_{i=1}^d (u_i - l_i)\big)^\frac{1}{d}    \\
		\text{subject to} \quad & \mathcal{I}_\text{inner}=[l,u] \subset \mathcal{S}. \\
	\end{aligned}
\end{equation}
The outer approximation is the tightest interval enclosed of $\As$, which can generally be obtained through the support functions in direction of the standard basis vectors \cite[Eq.~1]{althoff_support_2016}, or specifically for zonotopes and polytopes as in \cite[Prop.~2.2, Prop.~2.3]{althoff_reachability_2010}.

Since the intervals are nested such that $\mathcal{I}_\text{inner} \subseteq \As \subseteq \mathcal{I}_\text{outer}$, we can establish the general bounds $\square_{\mathcal{I}_\text{inner}} \leq \square \leq \square_{\mathcal{I}_\text{outer}}$ for the normalizing constant $\square = Z_{\As}(\theta)$ and for the entropy $\square = \mathcal{H}(\polsdot)$.
This holds for the former, because the integral over a nonnegative function is monotonically increasing when enlarging the integration domain. For the latter, the entropy of a truncated distribution $f(x;[l,u])$ is shown to be an increasing function if the \ac{CDF} $F(x)$ is log-concave, which is the case for Gaussian distributions \cite[Thm.~2.3]{shangari_partial_2012}.
These bounds enable us to interpolate the metric $\square$ between the inner and outer interval to achieve a tighter approximation than using either bound alone. We interpolate with
\begin{equation}
	\square_{\mathcal{I}_\text{combined}} = \left(1 - \frac{1}{2^d}\right) \square_{\mathcal{I}_\text{inner}} + \frac{1}{2^d} \square_{\mathcal{I}_\text{outer}},
\end{equation}
because the volume of $\As$ compared to the outer interval approximation decreases exponentially with the dimension~$d$ \cite[Thm~13.2.1]{matousek_volumes_2002}.

Finally, when $\As$ is convex and $\polsdot$ is a factorized Gaussian distribution, the mode is obtained by the action in $\As$, which minimizes the Mahalanobis distance to the mean:
\begin{equation}\label{eq:mode_convex}
	\begin{aligned}
		\argmin_a \quad         & (a - \mu)^\top \operatorname{diag}(\sigma^{-2}) (a - \mu) \\
		\text{subject to} \quad & a \in \As.
	\end{aligned}
\end{equation}

\subsubsection{Non-Convex and disjoint sets}\label{sec:non_convex_sets}
\newcommand{\ui}{\mathcal{U}_\mathcal{I}}
\newcommand{\finti}{f(x; \mathcal{I}^{(i)})}
\newcommand{\ii}{\mathcal{I}^{(i)}}
Under the very weak assumption that $\As$ is a measurable set, we can approximate it to arbitrary accuracy with a finite union of intervals, though the complexity increases exponentially with the dimension \cite[Ch. 3]{munkres_analysis_1991}.
Using this, we can approximate $\polsdot$ for any bounded $\As$ by truncating the original policy distribution to the union of $k$ intervals $\ui = \bigcup_{i=1}^k \mathcal{I}^{(i)}$.
\begin{proposition}\label{prop:truncated_policy_dist_union}
	Given the distribution truncated to the i-th interval $\finti$, the \ac{PDF} of the distribution $f(x)$ truncated to the union of $k$ non-overlapping intervals $\ui$ is
	\begin{equation}\label{eq:truncated_policy_dist_union}
		f(x; \ui) = \frac{f(x) \cdot \phi(x; \ui)}{Z_\mathcal{U}} = \sum_{i=1}^k w_i \finti,
	\end{equation}
	where $Z_\mathcal{U}$ is the normalizing constant of $f(x; \ui)$, and $w_i = \frac{Z_{\mathcal{I}^{(i)}}}{Z_\mathcal{U}}$ is the relative probability mass of the i-th interval.
\end{proposition}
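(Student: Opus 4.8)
The plan is to work directly from the definition in \eqref{eq:truncated_general}, exploiting the fact that the $k$ intervals are non-overlapping so that the indicator $\phi(x; \ui)$ splits as a sum over the $\ii$. First I would observe that, since $\ii \cap \mathcal{I}^{(j)}$ has Lebesgue measure zero for $i \neq j$, almost every $x$ lies in at most one of the intervals, and hence $\phi(x; \ui) = \sum_{i=1}^k \phi(x; \ii)$ for almost every $x$. By finite additivity of the integral over disjoint domains, the normalizing constant decomposes in the same way: $Z_\mathcal{U} = \int_{\ui} f(x)\,\mathrm{d}x = \sum_{i=1}^k \int_{\ii} f(x)\,\mathrm{d}x = \sum_{i=1}^k Z_{\ii}$.

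Substituting the decomposed indicator into the middle expression of \eqref{eq:truncated_policy_dist_union} gives $f(x; \ui) = \frac{1}{Z_\mathcal{U}} \sum_{i=1}^k f(x)\,\phi(x; \ii)$. Multiplying and dividing the $i$-th summand by $Z_{\ii}$ (assuming each retained interval carries positive probability mass, $Z_{\ii} > 0$; an interval with $Z_{\ii} = 0$ contributes nothing and may simply be discarded) yields
\[
f(x; \ui) = \sum_{i=1}^k \frac{Z_{\ii}}{Z_\mathcal{U}} \cdot \frac{f(x)\,\phi(x; \ii)}{Z_{\ii}} = \sum_{i=1}^k w_i \finti,
\]
where I identify each normalized term with $\finti$ via \eqref{eq:truncated_general} and set $w_i = Z_{\ii}/Z_\mathcal{U}$, which is exactly the claimed identity.

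To close the argument I would verify that this is a genuine mixture: the $w_i$ are nonnegative, and by the decomposition of $Z_\mathcal{U}$ established above, $\sum_{i=1}^k w_i = \frac{1}{Z_\mathcal{U}} \sum_{i=1}^k Z_{\ii} = 1$. Hence $f(x; \ui)$ is a convex combination of valid densities and therefore integrates to one, as a \ac{PDF} must; this also gives the sampling interpretation that one picks interval $i$ with probability $w_i$ and then draws from $\finti$.

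I do not anticipate a real obstacle here — the argument is short measure-theoretic bookkeeping. The single point that requires care is the non-overlapping hypothesis: it is precisely what makes the indicator additive and the integral separable over the pieces, and dropping it would introduce overcounting on the overlaps. I would also explicitly flag the harmless degenerate case $Z_{\ii} = 0$ so that the weights $w_i$ are well defined.
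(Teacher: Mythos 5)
Your argument is correct and follows essentially the same route as the paper's proof: decompose the indicator $\phi(x;\ui)=\sum_{i=1}^k\phi(x;\ii)$ using disjointness, substitute into \eqref{eq:truncated_general}, and multiply and divide each summand by $Z_{\ii}$ to identify the mixture weights $w_i$. Your additional checks (that $\sum_i w_i=1$ and the degenerate case $Z_{\ii}=0$) are sensible refinements but do not change the substance of the argument.
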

\begin{proof}
	The result is proven in Appendix~\ref{app:truncated_union}.
\end{proof}
\begin{proposition}
	The entropy of a \ac{PDF} $f(x; \ui)$ truncated to the union of $k$ non-overlapping intervals $\ui$ is
	\begin{equation}\label{eq:entropy_union}
		\mathcal{H}(f(x;\ui)) = -\sum_{i=1}^k w_i \log w_i + \sum_{i=1}^k \mathcal{H}(\finti).
	\end{equation}
\end{proposition}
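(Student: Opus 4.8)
The plan is to compute the differential entropy of the union-truncated density directly from its definition, $\mathcal{H}(f(x;\ui)) = -\int f(x;\ui)\log f(x;\ui)\,\mathrm{d}x$, using the representation $f(x;\ui)=\sum_{i=1}^k w_i\finti$ from Proposition~\ref{prop:truncated_policy_dist_union} together with the fact that the intervals $\ii$ are non-overlapping. First I would exploit the disjoint supports: on each interval $\ii$ exactly one summand is nonzero, so the restriction of $f(x;\ui)$ to $\ii$ is simply $w_i\finti$. This removes all cross terms and lets me break the single entropy integral into a sum of $k$ integrals, one over each $\ii$, which is the structural fact that makes the union case tractable at all.

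Second, on each piece I would expand the logarithm multiplicatively as $\log\!\big(w_i\finti\big)=\log w_i+\log\finti$ and distribute. The constant part gives $-w_i\log w_i\int_{\ii}\finti\,\mathrm{d}x = -w_i\log w_i$, since each $\finti$ is a normalized density on $\ii$; summing over $i$ yields the first term $-\sum_{i=1}^k w_i\log w_i$ of the claimed identity, which is precisely the Shannon entropy of the categorical variable selecting an interval with probabilities $w_i$. The remaining part contributes $-\int_{\ii}w_i\finti\log\finti\,\mathrm{d}x$ from each interval, and the inner integral $-\int_{\ii}\finti\log\finti\,\mathrm{d}x$ is exactly $\mathcal{H}(\finti)$ by \eqref{eq:entropy_truncated_univariate}.

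The main obstacle is reconciling this second part with the stated unweighted sum $\sum_{i=1}^k\mathcal{H}(\finti)$. The clean disjoint-support splitting forces each component entropy to appear multiplied by its probability mass $w_i$, so the continuous contribution comes out as the expected conditional entropy $\sum_{i=1}^k w_i\mathcal{H}(\finti)$. To reach the identity exactly as worded I would therefore need the weights $w_i$ on these entropy terms to drop out, which occurs only in degenerate situations (a single interval, or all $w_i=1$); otherwise a two-interval uniform sanity check already separates the two forms numerically. Pinning down whether the intended per-interval contribution silently carries the factor $w_i$ is thus the crux, and I would resolve this on a minimal example before committing to the final closed form.
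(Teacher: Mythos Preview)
Your approach is exactly the paper's: start from the entropy integral, insert the mixture form $f(x;\ui)=\sum_i w_i\finti$ from Proposition~\ref{prop:truncated_policy_dist_union}, use disjointness of the $\ii$ to kill all cross terms, and split $\log(w_i\finti)=\log w_i+\log\finti$. The paper's derivation passes from
\[
-\sum_{i=1}^k w_i \int_{\ii}\finti\big(\log w_i+\log\finti\big)\,\mathrm{d}x
\]
to
\[
-\sum_{i=1}^k w_i\log w_i \;-\;\sum_{i=1}^k \int_{\ii}\finti\log\finti\,\mathrm{d}x,
\]
silently dropping the factor $w_i$ in the second sum. Your suspicion is therefore exactly right: the algebra forces the weighted form $\sum_{i=1}^k w_i\,\mathcal{H}(\finti)$, and the proposition as printed is off by precisely those weights. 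Your two-interval uniform check is the correct diagnostic; e.g.\ with $f$ uniform on $[0,4]$ truncated to $[0,1]\cup[2,4]$ one gets $\mathcal{H}(f(\cdot;\ui))=\log 3$, which the weighted formula reproduces while \eqref{eq:entropy_union} gives $\log 3+\tfrac{1}{3}\log 2$. There is nothing further you need to do to complete the argument beyond recording the corrected identity
\[
\mathcal{H}\big(f(\cdot;\ui)\big)=-\sum_{i=1}^k w_i\log w_i+\sum_{i=1}^k w_i\,\mathcal{H}\big(\finti\big),
\]
which is the standard mixture-entropy decomposition for components with disjoint support.
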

\begin{proof}
	The result is proven in Appendix~\ref{app:entropy_union}.
\end{proof}
The accuracy of the approximation $\polsa \approx \pi_\theta(a \vert s)_{\ui}$ and $\mathcal{H}(\polsdot) \approx \mathcal{H}(\pi^s_\theta(\cdot \vert s)_{\ui})$ depends on the method used for approximating $\As$ with $\ui$.
A good approximation can, for instance, be obtained with the approach from \cite{brimkov_object_2000}.

The mode $\argmax_{a \in \As} \polsa$ can be obtained using \eqref{eq:mode_convex}, although the non-convex set $\As$ makes it a non-convex optimization problem, which is generally NP-hard \cite{murty_np-complete_1987}.
If $\As$ can be inner-approximated by a convex set, an under-approximation of the mode is obtained by solving \eqref{eq:mode_convex} for the inner-approximating set.

\subsection{Sampling}\label{sec:sampling}
\subsubsection{Efficient sample generation}
Rejection sampling as described in Sec. \ref{sec:prelim_sampling} is often efficient, but suffers from the major drawback of a potentially very low acceptance rate when $Z_{\As}(\theta)$ is small; in extreme cases, the sampling process can even become stuck.
At the same time, \Ac{RDHR} exhibits high computational costs per sample, but guaranteed convergence in polynomial time $\mathcal{O}(n^3)$ for well-conditioned sets (i.e., the ratio of the radii of its minimum enclosing ball and maximum contained ball is bounded by $\mathcal{O}(\sqrt{n})$) \cite{lovasz_hit-and-run_2006}.

To combine the strengths of both approaches, we propose to first execute rejection sampling for a maximum of $M$ attempts, and switch to the \ac{RDHR} algorithm when no sample has been accepted. This leverages the potentially fast rejection sampling, while ensuring sample generation via \ac{RDHR} in $\mathcal{O}(n^3)$ even when $Z_{\As}(\theta)$ is very small.

\subsubsection{Differentiating through sampling}
In \ac{SAC}, the policy gradient in \eqref{eq:sac_gradient} contains $\nabla_a Q_\phi(s,a)\nabla_\theta a \vert_{a \sim \pi_\theta}$, which requires differentiating through $a \sim \polsdot$ \cite{haarnoja_soft_2018}.
To achieve this, the reparameterization trick \cite{kingma_auto-encoding_2014} is employed, which expresses the sample as a differentiable function of an independent random variable $\varepsilon \in \mathbb{R}^d$ as $a = f_\theta(\varepsilon, s)$. This allows us to write the gradient as $\nabla_a Q_\phi\big(s,f_\theta(\varepsilon, s)\big) \nabla_\theta f_\theta(\varepsilon, s)$.
More specifically, for Gaussian distributions $\mathcal{N}(\mu, \Sigma)$, where $\theta = \{\mu, L\}$ and $\Sigma = L L^T$, $\varepsilon \sim \mathcal{N}(0, I)$ is sampled from the standard Gaussian distribution and transformed as $a = \mu + L \, \varepsilon$.

Accordingly, rejection sampling can simply be made differentiable by proposing the samples with $a = f_\theta(\varepsilon, s)$. Also, the inverse transform sampling proposed in Sec.~\ref{sec:analytic_solution} is directly differentiable \cite{kingma_auto-encoding_2014}. However, this is not the case for \ac{RDHR} sampling, hence we derive the differentiable reparameterization for \ac{RDHR}.
\begin{proposition}
	Let $\polsdot$ be a multivariate Gaussian distribution $\mathcal{N}_{\As}(\mu, \Sigma)$ truncated to the convex set $\As$, with the Cholesky decomposition $\Sigma = L L^T$. Samples $a^s \sim \polsdot$ can be obtained using $a^s = f_\theta(\tilde{\varepsilon}, s) = \mu + L \, \tilde{\varepsilon}$, where the random variable $\tilde{\varepsilon}$ is obtained by sampling from the standard Gaussian distribution $\mathcal{N}_{\Ast}(0, I)$ truncated to the set $\Ast = L^{-1} \big(\As \oplus (- \mu) \big)$. The operator $\oplus$ denotes the Minkowski sum.
\end{proposition}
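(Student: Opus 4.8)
The plan is to establish the distributional identity by pushing the truncated standard Gaussian $\mathcal{N}_{\Ast}(0, I)$ forward through the affine map $T(\varepsilon) = \mu + L\varepsilon$ and verifying, via the change-of-variables formula for densities, that the result is exactly $\polsa$ as defined in \eqref{eq:truncated_policy_dist}. Because $\Sigma$ is positive definite, its Cholesky factor $L$ is invertible, so $T$ is a smooth bijection of $\mathbb{R}^d$ with inverse $T^{-1}(a) = L^{-1}(a - \mu)$ and constant Jacobian determinant $|\det L| = \sqrt{\det \Sigma}$.

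First I would record two elementary facts. (i) $T$ maps $\Ast$ bijectively onto $\As$: by definition, $\varepsilon \in \Ast = L^{-1}\big(\As \oplus (-\mu)\big)$ holds iff $L\varepsilon \in \As \oplus (-\mu)$ iff $\mu + L\varepsilon \in \As$, so the indicator functions satisfy $\phi(\varepsilon; \Ast) = \phi(T(\varepsilon); \As)$; in particular $\Ast$ is convex, being an affine image of the convex set $\As$, so \ac{RDHR} is applicable to it. (ii) Writing $\varphi_I$ for the standard $d$-dimensional Gaussian density and $\varphi_\Sigma$ for the density of $\mathcal{N}(\mu, \Sigma)$, the identity $L^{-\top} L^{-1} = (L L^\top)^{-1} = \Sigma^{-1}$ gives $\varphi_I\big(L^{-1}(a - \mu)\big) = (2\pi)^{-d/2}\exp\big(-\tfrac{1}{2}(a - \mu)^\top \Sigma^{-1}(a - \mu)\big)$, hence $\varphi_I\big(T^{-1}(a)\big) / |\det L| = \varphi_\Sigma(a)$; this is just the ordinary, non-truncated reparameterization identity.

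I would then combine the two facts. Let $g(\varepsilon) = \varphi_I(\varepsilon)\,\phi(\varepsilon; \Ast) / Z_{\Ast}$ denote the density of $\mathcal{N}_{\Ast}(0, I)$, with $Z_{\Ast} = \int_{\Ast} \varphi_I(\varepsilon)\,\mathrm{d}\varepsilon$. By the change-of-variables formula, the density of $a^s = T(\tilde\varepsilon)$ is $p(a) = g\big(T^{-1}(a)\big) / |\det L|$, which by (i) and (ii) equals $\varphi_\Sigma(a)\,\phi(a; \As) / Z_{\Ast}$. Substituting $a = T(\varepsilon)$ in the defining integral of $Z_{\Ast}$ and using (ii) once more yields $Z_{\Ast} = \int_{\As} \varphi_\Sigma(a)\,\mathrm{d}a = Z_{\As}(\theta)$ (this equality also follows automatically, since $g$ and $p$ are both probability densities). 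Therefore $p(a) = \varphi_\Sigma(a)\,\phi(a; \As) / Z_{\As}(\theta)$, which is precisely $\polsa$, proving the claim. Differentiability is then immediate: $\tilde\varepsilon$ is drawn by \ac{RDHR} and treated as a constant, while $a^s = \mu + L\tilde\varepsilon$ is a smooth function of $\theta = \{\mu, L\}$, so gradients propagate through it exactly as in the standard reparameterization trick.

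I do not expect a genuine obstacle, since the statement amounts to the familiar Gaussian reparameterization plus a change of variables for the truncation region. The one step requiring care is the bookkeeping of the truncation set and the normalizing constant under $T$: one must confirm that the \emph{normalized} truncated density transforms correctly, i.e., that $Z_{\Ast} = Z_{\As}(\theta)$, and not merely that the un-normalized integrands match. A secondary point worth making explicit in the write-up is that $\Ast$ inherits convexity from $\As$ (and, if the $\mathcal{O}(n^3)$ guarantee of Sec.~\ref{sec:sampling} is desired, that its conditioning worsens by at most the conditioning of $L$), since this is what makes the differentiable \ac{RDHR} sampler on $\Ast$ well-defined and efficient.
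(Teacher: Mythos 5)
Your proof is correct and follows essentially the same route as the paper's: both rest on the affine pushforward $a^s = \mu + L\,\tilde{\varepsilon}$ of the standard Gaussian together with the set correspondence $L^{-1}\big(\mathcal{A}^s \oplus (-\mu)\big) = \tilde{\mathcal{A}}^s$. If anything, your write-up is the more complete of the two, since the paper only verifies these two facts separately (affine image of the non-truncated Gaussian, and image of the set), whereas you carry out the change-of-variables computation on the truncated density itself and confirm that the normalizing constants satisfy $Z_{\tilde{\mathcal{A}}^s} = Z_{\mathcal{A}^s}(\theta)$ --- the bookkeeping step that actually ties the two facts together.
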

\begin{proof}
	The result is proven in Appendix~\ref{app:reparam_trick} and illustrated in Fig.~\ref{fig:reparameterization_trick}.
\end{proof}
\begin{figure}
	\centering
	\newcommand{\makehexagoncoordsxy}[5]{%
	\pgfmathsetmacro{\xa}{#1 + #3*#4*cos(90)}
	\pgfmathsetmacro{\ya}{#2 + #3*#5*sin(90)}
	\pgfmathsetmacro{\xb}{#1 + #3*#4*cos(150)}
	\pgfmathsetmacro{\yb}{#2 + #3*#5*sin(150)}
	\pgfmathsetmacro{\xc}{#1 + #3*#4*cos(210)}
	\pgfmathsetmacro{\yc}{#2 + #3*#5*sin(210)}
	\pgfmathsetmacro{\xd}{#1 + #3*#4*cos(270)}
	\pgfmathsetmacro{\yd}{#2 + #3*#5*sin(270)}
	\pgfmathsetmacro{\xe}{#1 + #3*#4*cos(330)}
	\pgfmathsetmacro{\ye}{#2 + #3*#5*sin(330)}
	\pgfmathsetmacro{\xf}{#1 + #3*#4*cos(30)}
	\pgfmathsetmacro{\yf}{#2 + #3*#5*sin(30)}
	\xdef\hexagonpath{(\xa,\ya) -- (\xb,\yb) -- (\xc,\yc) -- (\xd,\yd) -- (\xe,\ye) -- (\xf,\yf) -- cycle}
}

\newcommand{\normalellipses}[5]{%
    \fill[#5] (#1,#2) circle [radius=0.05];
    
	\foreach \i in {1,...,6} {
			\pgfmathsetmacro{\factor}{1 - 0.15*(\i-1)}
			\pgfmathsetmacro{\opa}{0.15*(\i-1)}
			\draw[#5, opacity=\opa]
			(#1,#2) ellipse [x radius=#3*\factor, y radius=#4*\factor];
		}
}

\begin{tikzpicture}[scale=1.2]
    \pgfmathsetmacro{\sigmax}{1.9}
    \pgfmathsetmacro{\sigmay}{-1.1}
    \pgfmathsetmacro{\mux}{-3.85}
    \pgfmathsetmacro{\muy}{0.9}

    \pgfmathsetmacro{\posx}{-4.5}
    \pgfmathsetmacro{\posy}{1.5}
    \pgfmathsetmacro{\scalex}{1}
    \pgfmathsetmacro{\scaley}{0.7}

	\begin{scope}
        \makehexagoncoordsxy{\posx}{\posy}{1.5}{\scalex}{\scaley};
        \begin{scope}
            \clip \hexagonpath;
            \normalellipses{\mux}{\muy}{1.9}{1.1}{blue}
        \end{scope}
        \node[color=blue] at (\mux, \muy) [below right] {$\mathcal{N}_{\As}( \mu, \Sigma )$};
        \node at (\posx - 0.5, \posy+1) [below right] {$\As$};

        \draw[thick] \hexagonpath;

        \coordinate (origin) at (0,0);
        \coordinate (B) at ($ (origin.north west) + (0.01,0.1) $);
        \coordinate (A) at (0.5,0.5);
    
        \pgfmathsetmacro{\newx}{\posx - \mux + 0.25}
        \pgfmathsetmacro{\newy}{\posy - \muy - 0.05}
        \pgfmathsetmacro{\newscalex}{\scalex / \sigmax}
        \pgfmathsetmacro{\newscaley}{\scaley / \sigmay}

        \makehexagoncoordsxy{\newx}{\newy}{1.5}{\newscalex}{\newscaley};

        \begin{scope}
            \clip \hexagonpath;
            \normalellipses{0}{0}{1}{1}{blue}
        \end{scope}
        \node[color=blue] at (0, 0) [below right] {$\mathcal{N}_{\Ast}(0, I)$};
        \node at (\newx - 0.5, \newy + 0.8) [below right] {$\tilde{\As}$};

        \draw[thick] \hexagonpath;

        \draw[thick, -{latex}, blue] (0,-0.2) to[out=190, in=-40] (-3.5, 0.4);
        \node[blue] at (-1.9, 0.05) {$\mu + L \, \tilde{\varepsilon}$};

        \draw[thick, -{latex}] (-3.6,2.3) to[out=20, in=130] (-0.5, 1.5);
        \node[] at (-2.3, 2.8) {$L^{-1} \big(\As \oplus (-\mu) \big)$};
	\end{scope}

\end{tikzpicture}
	\caption{The reparameterization trick: We sample the independent random variable $\tilde{\varepsilon}$ from $\mathcal{N}_{\tilde{\As}}(0,1)$ truncated to $\tilde{\As}$,
	and then apply the affine transformation $a^s = \mu + L \, \tilde{\varepsilon}$ to obtain samples from $\polsdot = \mathcal{N}_{\As}(\mu, \Sigma)$, where $\Sigma = L L^T$.}
	\label{fig:reparameterization_trick}
\end{figure}

\subsubsection{Sampling from non-convex sets}
When $\As$ is the union of disjoint convex sets, we draw samples from $\polsdot$ by first selecting a set ${\As}^{(i)}$ with a probability proportional to its relative probability mass $w_i$ (see Prop.~\ref{prop:truncated_policy_dist_union}), then sampling $a^s$ from the chosen convex set with a method from Sec.~\ref{sec:sampling}.
For general non-convex $\As$, geometric random walks are not applicable, leaving rejection sampling as the primary option. In this case, it is advisable to still cap the number of rejection attempts at $M$, and, if possible, resort to a fallback action guaranteed to lie in $\As$, although this introduces a bias.

\section{Experiments}\label{sec:experiments}
\subsection{Numerical Experiments}
We generate a dataset of $6000$ factorized Gaussian distributions truncated to polytopes, with $1000$ random instances for each dimension $d = 2, \ldots, 6$. 
The samples are generated to achieve a balanced distribution of probability mass inside $\As$.
Each polytope $\mathcal{P}$ is formed by intersecting the unit box $[-1, 1]^d$ with $n_P \sim \mathcal{U}(d, 4d)$ random halfspaces. Each halfspace uses a random unit vector $a_j \in \mathbb{R}^d$ and offset $b_j = a_j^\top x_0 + \delta_j$, where $x_0 \sim \mathcal{U}(-0.8, 0.8)^d$ and $\delta_j \sim \mathcal{U}(0.1, 1.0)$. For the Gaussian distribution, we set $\mu = x_1 + c$, with $c$ as the center of $\mathcal{P}$, and obtain $x_1 \sim \mathcal{U}(0, 0.5)^d$, and $\sigma \sim \mathcal{U}(0.1, 1)^d$.

\subsubsection{Integral approximation}\label{sec:exp_numerical_approximation}
We compare the accuracy of our interval-based approximations for the integral $\int_{\tilde{a} \in \As(s)} \pol{\tilde{a}} \mathrm{d}\tilde{a}$ as defined in Sec.~\ref{sec:approximations}. The ground truth is estimated with a precision of $1e-5$ using cubature methods \cite{genz_adaptive_2003}\footnote{We use the implementation provided in the package PySimplicialCubature (https://github.com/stla/PySimplicialCubature).}.
Fig.~\ref{fig:integral_comparison} shows the absolute errors for $d=2, \ldots, 6$, which are normalized to the average integral size in each dimension. The outer approximation shows the highest error for $d > 2$, which is expected, since volume in higher dimensions is concentrated near the boundary of geometries \cite{matousek_volumes_2002}. The combined method exhibits the lowest median and quartile range errors in every dimension, as expected from the derived bounds for the integral in Sec.~\ref{sec:approximations}.
\begin{figure}[h!]
	\centering
	\includegraphics[width=0.95\columnwidth]{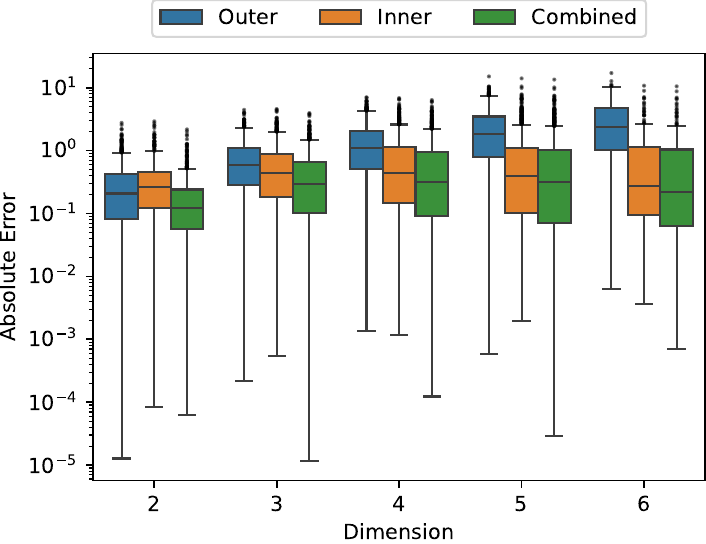}
	\caption{Errors for the integral approximation using the outer, inner, and combined approximation of the polytope.}
	\label{fig:integral_comparison}
\end{figure}

\subsubsection{Sampling}
We assess sampling efficiency (see Sec.~\ref{sec:sampling}) by averaging the time to draw $10$ samples per truncated distribution in the dataset. Fig.~\ref{fig:sampling_time_comparison} presents the sampling times for \ac{RDHR} (geometric random walk), rejection sampling, and our combined sampling strategy with rejection limit $M=100$.
As expected, \ac{RDHR} shows significantly higher median sampling time and lower variance compared to the other methods, and pure rejection sampling exhibits large outliers towards high sampling times. 
The combined method effectively removes these outliers, as intended, and reduces the median sampling time in lower dimensions.
\begin{figure}[h!]
	\centering
	\includegraphics[width=0.95\columnwidth]{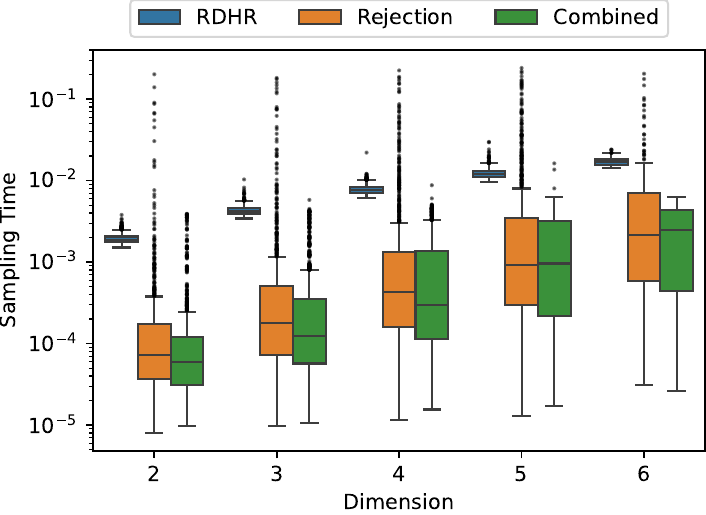}
	\caption{Comparison of the sampling times using the \ac{RDHR}, rejection sampling and the combined sampling method with rejection limit $M=100$.}
	\label{fig:sampling_time_comparison}
\end{figure}

\subsection{Benchmarks}
We evaluate our methods using environments with action constraints that guarantee safety. Computing the safe action sets $\As(s)$ for these environments requires concepts from reachability analysis that are beyond the scope of this paper; the details are provided in Appendix~\ref{app:feasible_action_set}. All environments are implemented in Gymnasium \cite{towers2024gymnasium}, and we use the implementations of \ac{SAC} and \ac{PPO} from Tianshou \cite{tianshou}.

\subsubsection{Seeker}
Our first two environments are based on the Seeker Reach-Avoid environment from \cite[Sec. 5.1.1, A.3.3]{stolz_excluding_2024}. This environment represents a prototypical 2D reach-avoid task in which an agent must reach a goal area while avoiding a single obstacle. We generalize this environment to 2D (\textit{Seeker-2D}) and 3D (\textit{Seeker-3D}) with multiple obstacles and compute feasible action sets as polytopes and intervals instead of zonotopes (see Appendix~\ref{app:seeker_action_set}). We use the slightly modified reward function:
\begin{equation}
	r(a,s) =
	\begin{cases}
		100,                               & \text{if the goal is reached} \\
		-100,                              & \text{if a collision occurs}  \\
		l_\text{prev} - l_\text{curr} - 1, & \text{otherwise},
	\end{cases}
\end{equation}
where $l = \left\| p_{\text{agent}} - p_{\text{goal}}\right\|_2$, and $l_\text{prev}$ and $l_\text{curr}$ are the distances to the goal in the previous and current time step, respectively.

\subsubsection{Quadrotor}

Our third environment (\textit{Quadrotor}) is a 2D quadrotor stabilization task from \cite[Sec. 5.1.2, A.3]{stolz_excluding_2024} in which the agent must stabilize a quadrotor at a goal state $s^*$ while compensating noise. The feasible action sets are computed based on the system dynamics and a desired state set, as detailed in Appendix~\ref{app:quadrotor_action_set}. We use the slightly modified reward function
\begin{equation}
	r(a,s) = \exp\left(-\|s-s^{*}\|_2 - 0.005 \sum_{i=1}^2 a_{i, \text{c}}\right) - 1,
\end{equation}
where $a_{i, \text{c}} = \frac{a_i - a_\mathrm{i,min}}{a_\mathrm{i,range}}$ is the normalized action cost, and $a_\mathrm{i,min}$ is the minimal action, and $a_\mathrm{i,range}$ is the absolute range of actions in dimension $i$.

\subsubsection{Benchmark results}

We optimize hyperparameters for the base \ac{RL} algorithms \ac{SAC} and \ac{PPO}, then use these for training (10 runs per algorithm) with truncated distributions, considering both interval and polytope representations of $\As$. 
Our hypothesis is that accurately estimating $\log \polsdot$ and $\mathcal{H}\big( \polsdot \big)$ improves performance over using the original metrics from $\poldot$, which we test with the following setup:
For polytopes, we compare policies using original metrics (\textit{Og-Poly}) to our outer, inner, and combined approximations (\textit{App-Poly-Out}, \textit{App-Poly-Inn}, \textit{App-Poly-Comb}; see Sec.\ref{sec:approximations}).
For intervals, we compare exact analytic metrics (\textit{Exact-Int}; see Sec.\ref{sec:analytic_solution}) to original values (\textit{Og-Int}). Fig.~\ref{fig:rl_results} shows mean episode returns and $95\%$ confidence intervals for the three environments.

\newcommand\subfigwidth{0.85}

\captionsetup[subfigure]{aboveskip=0pt,belowskip=0pt}
\begin{figure*}[h!]
	\centering
	\begin{subfigure}{\subfigwidth\textwidth}
		\centering
		\includegraphics[width=\linewidth,valign=c]{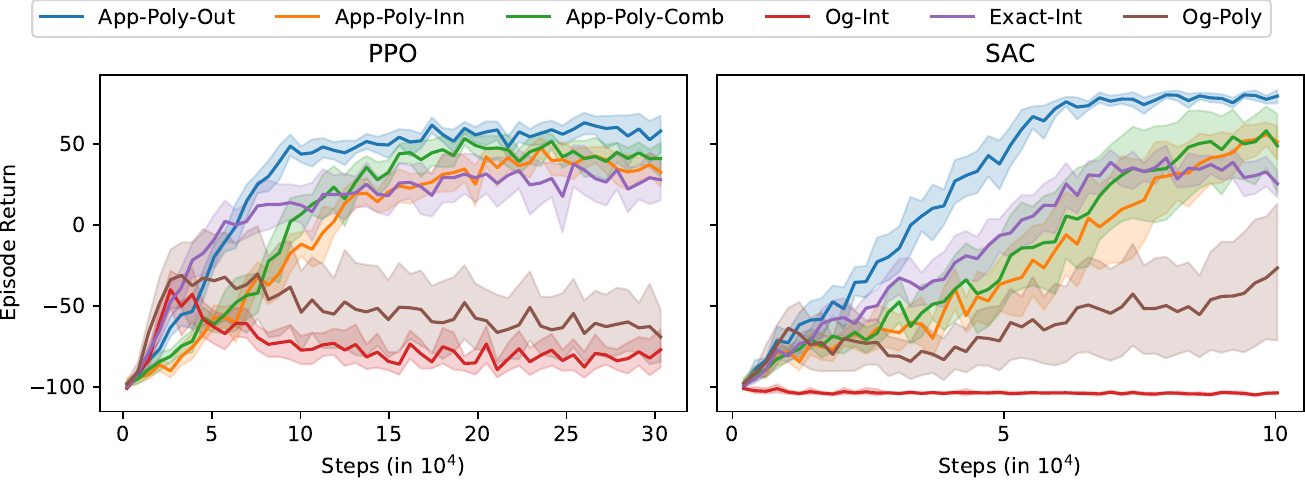}
		\caption{Seeker-2D}
		\label{fig:result1}
	\end{subfigure}
	\begin{subfigure}{\subfigwidth\textwidth}
		\centering
		\includegraphics[width=\linewidth,valign=c]{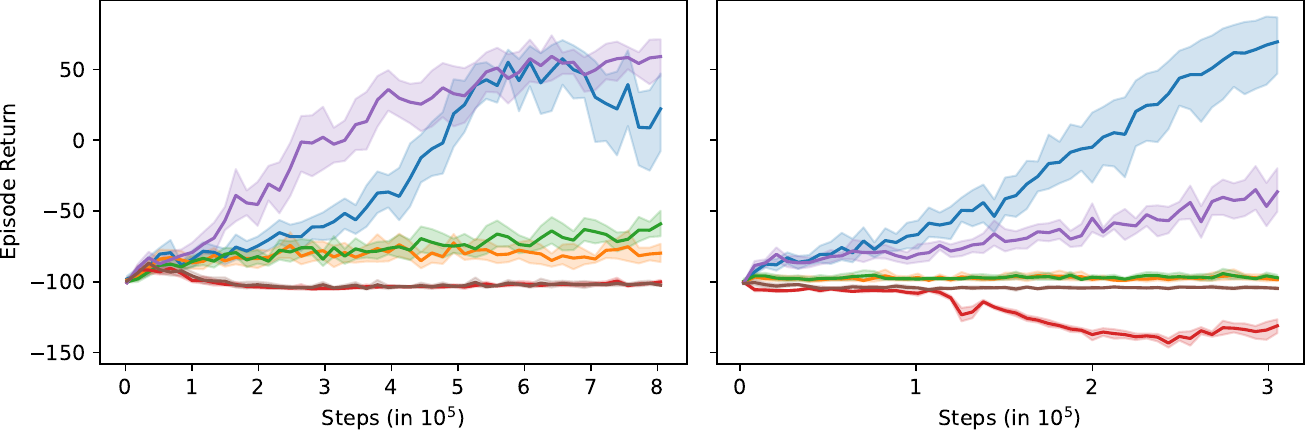}
		\caption{Seeker-3D}
		\label{fig:result2}
	\end{subfigure}
	\begin{subfigure}{\subfigwidth\textwidth}
		\centering
		\includegraphics[width=\linewidth,valign=c]{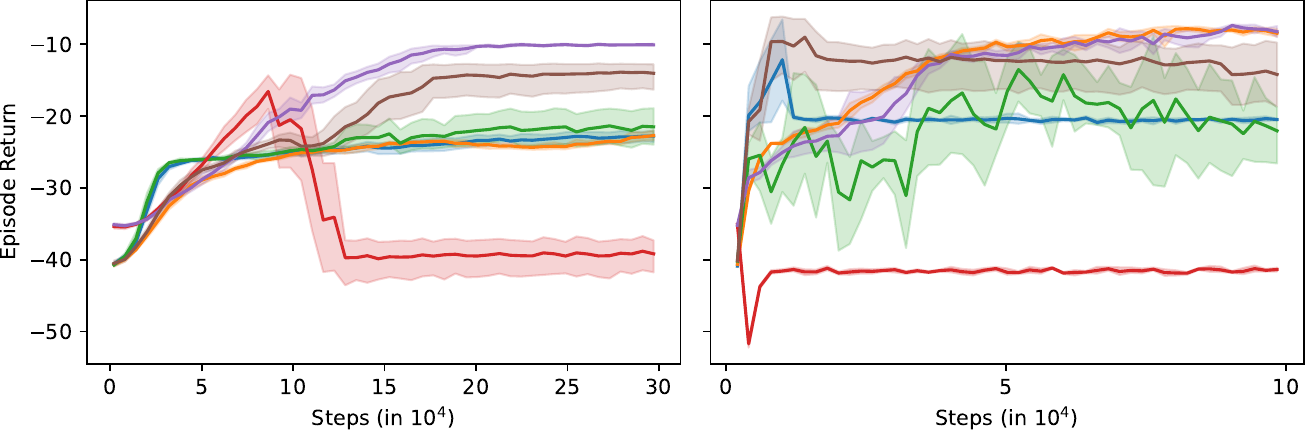}
		\caption{Quadrotor}
		\label{fig:result3}
	\end{subfigure}
	\caption{The mean returns and $95\%$ confidence intervals during \ac{RL} training. \textit{App-Poly-Out}, \textit{App-Poly-Inn}, and \textit{App-Poly-Comb}, refer to the outer, inner, and combined approximations, while \textit{Og-Int} and \textit{Og-Poly} refer to the interval and polytope policies using the original metrics of $\poldot$. \textit{Exact-Int} is the interval policy that computes exact analytic metrics.}
	\label{fig:rl_results}
\end{figure*}

In \textit{Seeker-2D}, both \ac{SAC} and \ac{PPO}, perform best with \textit{App-Poly-Out}, followed by the inner and combined approximations. \textit{Exact-Int} also achieves high returns, while policies using the original values both struggle to learn the task.

The difference between \textit{App-Poly-Out}, and the \textit{App-Poly-Inn} and \textit{App-Poly-Comb} is substantially larger in the \textit{Seeker-3D} environments. The overall returns are also lower than in the 2D case, suggesting significantly higher task complexity. Again, both algorithms fail to learn using the original values, while \ac{SAC} with \textit{App-Poly-Out} achieves the highest returns.

On the \textit{Quadrotor} environment with \ac{PPO}, \textit{Exact-Int} achieves the highest returns, whereas \textit{Og-Int} fails again after initial progress. However, \textit{Og-Poly} achieves results that are slightly better than the approximations.
With \ac{SAC}, \textit{Exact-Int} and \textit{App-Poly-Inn} perform best. \textit{Og-Poly} learns initially, but fails to improve further, and \textit{Og-Int} does not learn at all.

\subsection{Discussion and Limitations}

\subsubsection{Runtime}
Three factors dominate the training time of the algorithms: 1) Computing $\As$ (outside the scope of this work), 2) sampling from $\polsdot$, and 3) estimating the metrics of $\polsdot$, which is dominated by estimating the normalizing constant $Z_{\As}(\theta)$.

The isolated sampling experiment shows that our combined sampling method is faster than pure rejection sampling and \ac{RDHR}, making it the recommended approach when $\As$ is convex. The overall sampling time depends on the set representation of $\As$ and the dimensionality of the action space. The containment check in rejection sampling for polytopes with $m$ halfspaces requires evaluating the halfspace condition $m$ times, and for zonotopes requires solving a linear program \cite{kulmburg_co-np-completeness_2021}.
Geometric random walks compute two boundary points in each step, the complexity of which also depends on the representation of $\As$. Additionally, the sampling time depends on the usual probability mass inside $\As$ during training. When it is low, the algorithm often falls back to geometric random walks, which can result in slow training.

Regarding 3), our estimates of the metrics of $\polsdot$  introduce overhead compared to simply using the non-truncated distribution $\poldot$ as in \cite{stolz_excluding_2024}. The overhead again depends on the representation of $\As$. For intervals, the overhead using the analytic solutions from Sec.~\ref{sec:analytic_solution} is minimal. However, the approximations for general convex sets introduce more overhead. For instance, with polytopes in halfspace representation, the outer approximation with support functions requires solving $2d$ linear programs, while the inner approximation in \eqref{eq:inner_interval_approximation} solves one second-order cone program with polynomial complexity \cite[Sec.~2.5]{boyd2004convex}.

\subsubsection{\ac{RL} performance}
The benchmark results in Fig.~\ref{fig:integral_comparison} clearly indicate that using policy metrics from the original distribution $\poldot$ is ineffective. The strong performance of \textit{Exact-Int} (with analytic solutions) compared to \textit{Og-Int} highlights the benefit of accurate estimates for $\log \polsa$ and $\mathcal{H}\big( \polsdot \big)$.
Two effects explain the bad performance when using $\polsa \approx \pola$ (as in \textit{Og-Int} and \textit{Og-Poly} do). First, the gradient $\nabla_\theta \polsa \approx \nabla_\theta \pola$ is now independent of the normalizing constant $Z_{\As}(\theta)$, hence it might point in the wrong direction. 
Second, through truncation, we often sample actions with very low probability under the original, non-truncated distribution, which can result in a small policy gradient, leading to inefficient learning.

The outer approximation of the polytope with an interval exhibits higher errors compared to the inner or combined approximation in the numerical comparison (see Fig.~\ref{fig:integral_comparison}). Despite this, the \textit{App-Poly-Out} often achieves higher returns than the other two in the \ac{RL} training.
We attribute this to small $\As$ often causing numerical instabilities in the estimation of $Z_{\As}(\theta)$ due to very low probability mass, making the slight positive bias of the outer approximation advantageous.
Future work should further investigate the potential benefit of this bias, and evaluate more precise approximations of $\As$ using the union of intervals (as proposed in Sec.~\ref{sec:non_convex_sets}) numerically, and in \ac{RL} benchmarks.

\section{Conclusion}\label{sec:conclusion}
Achieving effective policy updates, computational efficiency, and a predictable runtime are key challenges in action-constrained \ac{RL}. Previous work proposes truncated distributions, but uses inaccurate metrics and inefficient sampling.
We improve this by developing more accurate numerical estimates for the truncated policy metrics as well as an efficient hybrid sampling approach, and derive a differentiable version of the sampling mechanism, which enables the use of truncated distributions with \ac{SAC}.
The experiments demonstrate that our approach leads to a significantly improved performance across three benchmark environments, underlining the importance of estimating accurate policy metrics in action-constrained \ac{RL}.
We believe that accurate estimations of truncated distributions enable applications beyond those commonly considered in the action-constrained \ac{RL} literature. Future work could, for example, investigate its use in physics-informed algorithms, curriculum learning, or other areas using probability distributions in the learning process, such as representation learning.

\section*{Acknowledgements and Disclosure of Funding}
We thank Paul Moritz Koebe for contributing to the implementation of the Seeker environment. 
We gratefully acknowledge that this project was funded by the Deutsche Forschungsgemeinschaft (DFG, German Research Foundation) – SFB 1608 – 501798263; and DFG 458030766.

\small
\bibliography{aaai2026}

\pagebreak
\normalsize
\appendix
\newcommand{\inti}{\int_{x \in \mathcal{I}^{(i)}}}
\newcommand{\sumi}{\sum_{i=1}^k}

\section{Proof of Proposition 1}\label{app:truncated_union}
Since the $k$ intervals $\mathcal{I}^{(i)}$ are non-overlapping, the indicator function for the union of intervals $\ui$ can be decomposed into the sum of the individual indicator functions:
\begin{equation*}
	\phi(x; \ui) = \sumi \phi(x; \mathcal{I}^{(i)}).
\end{equation*}
We can substitute this into the expression of the distribution truncated to $\ui$ as
\begin{align*}
	f( & x; \ui)  \overset{\eqref{eq:truncated_general}}{=} \frac{f(x) \sum_{i=1}^k \phi(x; \ii)}{Z_\mathcal{U}}      \\
	   & = \sum_{i=1}^k \frac{f(x) \phi(x; \ii)}{Z_\mathcal{U}}
	= \sum_{i=1}^k \frac{Z_{\mathcal{I}^{(i)}} f(x) \phi(x; \ii)}{Z_{\mathcal{I}^{(i)}} Z_\mathcal{U}}                \\
	   & \overset{\eqref{eq:truncated_general}}{=} \sum_{i=1}^k \frac{Z_{\mathcal{I}^{(i)}}}{Z_\mathcal{U}} f(x; \ii)
	= \sumi w_i \finti,
\end{align*}
where $w_i = \frac{Z_{\mathcal{I}^{(i)}}}{Z_\mathcal{U}}$. This concludes the proof.

\section{Proof of Proposition 2}\label{app:entropy_union}

Since the intervals $\mathcal{I}^{(i)}$ are disjoint, we have that
\begin{equation}\label{eq:proof_2_supp}
	\int_{x \in \ui}  \sum_{i=1}^k \finti \mathrm{d}x = \sumi \inti \finti \mathrm{d}x.
\end{equation}
Using the general expression for the entropy \cite[Eq.~1.1]{shangari_partial_2012}, we can write
\begin{align*}
	 & \mathcal{H}  \big( f(x; \ui) \big) =   - \int_{x \in \ui} f(x; \ui) \log f(x; \ui) \mathrm{d}x                                                                      \\
	 & \overset{Prop.~\ref{prop:truncated_policy_dist_union}, \,\eqref{eq:truncated_policy_dist_union}}{=} -\inti \sumi w_i \finti \log \big( w_i \finti \big) \mathrm{d}x \\
	 & \overset{\eqref{eq:proof_2_supp}}{=} - \sumi \inti w_i \finti \log \left( w_i \finti \right) \mathrm{d}x                                                            \\
	 & = - \sumi w_i \inti \finti \big( \log w_i + \log \finti \big) \mathrm{d}x                                                                                           \\
	 & = - \sumi w_i \log w_i \underbrace{\inti \finti \mathrm{d}x}_1                                                                                                      \\
	 & \quad\quad - \sumi \underbrace{\inti \finti \log \finti \mathrm{d}x}_{-\mathcal{H}\big(\finti \big)}                                                                \\
	 & = - \sumi w_i \log w_i + \sumi \mathcal{H}\big(\finti \big),
\end{align*}
which is equivalent to \eqref{eq:entropy_union}, thus concluding the proof.

\section{Proof of Proposition 3}\label{app:reparam_trick}
For the proposition to be true, we need to show for the reparameterization $a^s = f(\tilde{\varepsilon}, s) = \mu + L \, \tilde{\varepsilon}$ that 1) applying it to the samples of the standard Gaussian distribution $\mathcal{N}(0, I)$ results in the Gaussian distribution $\mathcal{N}(\mu, \Sigma)$, with the Cholesky decomposition $\Sigma = L L^T $, and 2), applying it to all elements in $\Ast$ results in $\As$.

For 1), $f(\tilde{\varepsilon}, s)$ is simply an affine transformation of the standard Gaussian distribution $\mathcal{N}(0, I)$, which results in the Gaussian distribution \cite[Thm.~3.3.3]{tong_multivariate_1990}
\begin{equation}
	\mathcal{N}( L 0 + \mu, L I L^T )
	=\mathcal{N}( \mu, \Sigma ) .
\end{equation}
For 2), we can write $\As = \{ \mu + L \, \tilde{\varepsilon} \mid \tilde{\varepsilon} \in \Ast \}$ as
\begin{align*}
	\As                                                           & = L \Ast \oplus \mu \\
	\As \oplus (-\mu)                                             & = L \Ast            \\
	L^{-1} \big(\As \oplus (-\mu) \big) & = \Ast,
\end{align*}
which shows that applying the transformation to all elements in $\Ast$ results in $\As$, thus concluding the proof.


\section{Computation of the Feasible Action Set}\label{app:feasible_action_set}
The computation of the feasible action set using reachability analysis follows a similar principle as detailed in \cite[Sec.~A.3]{stolz_excluding_2024}, with the modification of $\As$ being represented as a polytope instead of a zonotope. We also assume the existence of a robust control invariant set $\mathcal{S}^r$, which guarantees that there is always an action that keeps the system within its boundaries given $s_0 \in \mathcal{S}^r$.
We summarize the concept and highlight the differences to the zonotope-based computation in the following sections. Interested readers are referred to the original source \cite{stolz_excluding_2024}.

\subsection{Quadrotor}\label{app:quadrotor_action_set}
We introduce a zonotope as $\mathcal{Z} = \{ c + G \beta \mid \Vert\beta \Vert_\infty \leq 1 \} = \langle c, G \rangle_\mathcal{Z}$ with support function for direction $l \in \mathbb{R}^d$
as $\rho_{\mathcal{Z}}(\ell) = \ell^\top c + \|\ell^\top G\|_1$ \cite[Lemma~1]{althoff_support_2016}.
The discrete-time linearized system dynamics of the quadrotor from \cite[Eq.~30]{stolz_excluding_2024} are:
\begin{equation}
	s_{k+1} = A s_k + B a_k + w_k',
\end{equation}
where $A \in \mathbb{R}^{m \times m}$ and $B \in \mathbb{R}^{m \times d}$ are the system matrices, and $w_k' \in \mathcal{W} = \langle c^\mathcal{W}, G^\mathcal{W} \rangle_\mathcal{Z} \subset \mathbb{R}^{m}$ is the disturbance at time step $k$.


Starting from states $\mathcal{S}_0$, and using the feasible inputs $\As$, the set of reachable states of the system at time step $\Delta t$ is $\mathcal{R}_{\Delta t}(\mathcal{S}_0, \As)$. The robust control invariant set is a zonotope $\mathcal{S}^r = \langle c^\mathcal{S}, G^\mathcal{S} \rangle_\mathcal{Z} \subseteq \mathbb{R}^m$.
To ensure $\mathcal{R}_{\Delta t}(\mathcal{S}_0, \mathcal{A}_\mathcal{P}) \subseteq \mathcal{S}^r$, we utilize support functions. For a given direction $\ell$, the containment constraint becomes \cite[Eq.~1]{althoff_support_2016}
\begin{equation}\label{eq:support_function_constraint}
	\ell^\top (A s_k + B a_k) \leq \rho_{\mathcal{S}^r}(\ell) - \rho_{\mathcal{W}}(\ell).
\end{equation}
To account for $\mathcal{W}$ and $\mathcal{S}^r$, we apply the constraint for each direction defined by the generators of the zonotopes, i.e., $L = [ G^\mathcal{W}, G^{\mathcal{S}}]^\top$, where $L \in \mathbb{R}^{n_G \times m}$ and $n_G$ is the total number of generators in $\mathcal{W}$ and $\mathcal{S}^r$.
Further, we represent the action space $\mathcal{A}$ with the constraints $P_\mathcal{A} a \leq p_\mathcal{A}$.
From this and \eqref{eq:support_function_constraint}, we can define the feasible action set as
\begin{equation}
	\begin{aligned}
		 & \As_\mathcal{P}(s_k)=                                                                   \\
		 & \left\{ a \in \mathbb{R}^2 : \begin{bmatrix} P_\mathcal{A} \\ L^TB \end{bmatrix} a \leq
		\begin{bmatrix} p_\mathcal{A} \\ \rho_1 - l_1^T A s_k \\
			\vdots        \\
			\rho_{n_G} - l_{n_G}^T A s_k\end{bmatrix} \right\},
	\end{aligned}
\end{equation}
where $\rho_i = \rho_{\mathcal{S}^r}(l_i) - \rho_{\mathcal{W}}(l_i)$ to simplify notation.


\subsection{Seeker}\label{app:seeker_action_set}

We directly construct the feasible action as a polytope in halfspace representation $\As_\mathcal{P}$ from the optimization problem in \cite[Eq.~35]{stolz_excluding_2024}. The following formulation is generalized to be viable in any dimension $d$, and work for multiple obstacles instead of just one.
Due to the simple dynamics of the seeker environment,
$s_{k+1} = s_k + a_k$,
the boundary collision constraints $s_k + a \in [-10, 10]^d$ are
\begin{equation}
	P_b = \begin{bmatrix} I_d \\ -I_d \end{bmatrix}, \quad p_b = \begin{bmatrix} 10\cdot \mathbf{1}_d - s_k \\ 10 \cdot \mathbf{1}_d + s_k \end{bmatrix},
\end{equation}
where $I_d$ is the identity matrix and $\mathbf{1}_d$ is a vector of ones in $d$ dimensions, respectively.
To avoid the $m$ obstacles $\mathcal{O}_i$ with center $o_i$ and radius $r_i$, the constraints for $i = 1, \ldots, m$ are
\begin{equation}
	P_o = \begin{bmatrix} n_1^\top \\ \vdots \\ n_n^\top \end{bmatrix}, \quad p_o = \begin{bmatrix} b_1 - n_1^\top s_k \\ \vdots \\ b_n - n_n^\top s_k \end{bmatrix},
\end{equation}
where $n_i = \frac{o_i - s_k}{\|o_i - s_k\|}$ and $b_i = n_i^\top o_i - r_i$ following the halfspace approximation from \cite[App.~A.3.3]{stolz_excluding_2024}. The feasible action set is then
\begin{equation}
	\As_\mathcal{P}(s_k) = \left\{a \in \mathbb{R}^d : \begin{bmatrix} P_\mathcal{A} \\ P_b \\ P_o \end{bmatrix} a \leq \begin{bmatrix} p_\mathcal{A} \\ p_b \\ p_o \end{bmatrix}\right\}.
\end{equation}

\section{Hyperparameters}
The following ranges were used for Bayesian hyperparameter optimization with \ac{PPO}:
\begin{itemize}
	\item \textbf{batch\_size:} \{64, 128, 256\},
	\item \textbf{n\_neurons:} \{64, 128, 256\},
	\item \textbf{lr:} log-uniform, $[1 \times 10^{-5}, 1 \times 10^{-3}]$,
	\item \textbf{eps\_clip:} uniform, $[0.1, 0.3]$,
	\item \textbf{repeat\_per\_collect:} integer uniform, $[1, 5]$,
	\item \textbf{ent\_coef:} log-uniform, $[5 \times 10^{-4}, 5 \times 10^{-2}]$,
	\item \textbf{vf\_coef:} uniform, $[0.25, 1.0]$,
\end{itemize}
and with \ac{SAC}:
\begin{itemize}
	\item \textbf{batch\_size:} \{64, 128, 256\},
	\item \textbf{n\_neurons:} \{64, 128, 256\},
	\item \textbf{actor\_lr:} log-uniform, $[1 \times 10^{-5}, 1 \times 10^{-3}]$,
	\item \textbf{critic\_lr:} log-uniform, $[1 \times 10^{-5}, 1 \times 10^{-3}]$,
	\item \textbf{update\_per\_step:} uniform, $[0.5, 1.5]$,
	\item \textbf{gamma:} uniform, $[0.95, 0.999]$,
	\item \textbf{tau:} log-uniform, $[1 \times 10^{-3}, 1 \times 10^{-1}]$,
	\item \textbf{alpha:} uniform, $[0.05, 0.5]$.
\end{itemize}

The hyperparameters obtained over 100 optimizations for the three environments are noted in Tab.~\ref{tab:hp_ppo} and Tab.~\ref{tab:hp_sac}.

\begin{table}[h]
	\caption{PPO Hyperparameters.}
	\centering
	\begin{tabular}{l c c c}
		\toprule
		\textbf{Hyperparameter} & \textbf{Quadrotor} & \textbf{Seeker-2d} & \textbf{Seeker-3d} \\
		\midrule
		batch Size              & 64                 & 128                & 64                 \\
		ent\_coef               & 0.0224             & 0.0013             & 0.0045             \\
		eps\_clip               & 0.1031             & 0.2019             & 0.2233             \\
		learning rate           & 8.35e-4            & 4.77e-4            & 7.90e-4            \\
		n\_neurons              & 256                & 64                 & 256                \\
		repeat\_per\_collect    & 5                  & 4                  & 1                  \\
		vf\_coef                & 0.8413             & 0.8958             & 0.2525             \\
		\bottomrule
	\end{tabular}
	\label{tab:hp_ppo}
\end{table}

\begin{table}[h]
	\caption{SAC Hyperparameters}
	\centering
	\begin{tabular}{l c c c}
		\toprule
		\textbf{Hyperparameter} & \textbf{Quadrotor} & \textbf{Seeker-2d} & \textbf{Seeker-3d} \\
		\midrule
		batch bize              & 64                 & 256                & 64                 \\
		actor\_lr               & 9.41e-4            & 6.23e-5            & 4.02e-5            \\
		critic\_lr              & 4.32e-5            & 6.23e-4            & 3.32e-4            \\
		alpha                   & 0.0509             & 0.1499             & 0.3643             \\
		gamma                   & 0.9931             & 0.9911             & 0.9963             \\
		n\_neurons              & 256                & 64                 & 128                \\
		tau                     & 0.0143             & 0.0563             & 0.0525             \\
		update\_per\_step       & 1.05               & 1.28               & 1.37               \\
		\bottomrule
	\end{tabular}
	\label{tab:hp_sac}
\end{table}

\end{document}